\DeclareMathOperator*{\argmax}{arg\,max}
\newtheorem{theorem}{Theorem}
\newtheorem{corollary}{Corollary}
\title{\LARGE \bf
Trust-Aware Embodied Bayesian Persuasion for Mixed-Autonomy
}
\author{Shaoting Peng$^{1*}$, Katherine Driggs-Campbell$^1$, Roy Dong$^2$
\thanks{$^*$ Corresponding author: {\tt\small peng33@illinois.edu}}
\thanks{$^{1}$Electrical \& Computer Engineering, University of Illinois Urbana-Champaign, IL, 61801, USA}
\thanks{$^{2}$Industrial \& Enterprise Systems Engineering, University of Illinois Urbana-Champaign, IL, 61801, USA}
}
\begin{document}

\maketitle
\thispagestyle{empty}
\pagestyle{empty}

\begin{abstract}
Safe and efficient interaction between autonomous vehicles (AVs) and human-driven vehicles (HVs) is a critical challenge for future transportation systems. While game-theoretic models capture how AVs influence HVs, they often suffer from a long-term decay of influence and can be perceived as manipulative, eroding the human's trust. This can paradoxically lead to riskier human driving behavior over repeated interactions.
In this paper, we address this challenge by proposing the Trust-Aware Embodied Bayesian Persuasion (TA-EBP) framework. Our work makes three key contributions: First, we apply Bayesian persuasion to model communication at traffic intersections, offering a transparent alternative to traditional game-theoretic models. Second, we introduce a trust parameter to the persuasion framework, deriving a theorem for the minimum trust level required for influence. Finally, we ground the abstract signals of Bayesian persuasion theory into a continuous, physically meaningful action space, deriving a second theorem for the optimal signal magnitude, realized as an AV's forward nudge.
Additionally, we validate our framework in a mixed-autonomy traffic simulation, demonstrating that TA-EBP successfully persuades HVs to drive more cautiously, eliminating collisions and improving traffic flow compared to baselines that either ignore trust or lack communication. Our work provides a transparent and non-strategic framework for influence in human-robot interaction, enhancing both safety and efficiency.
\end{abstract}

\section{INTRODUCTION}

With the advancement of self-driving technology, our traffic systems are increasingly populated by autonomous vehicles (AVs). This trend is set to accelerate; according to the World Economic Forum, it is projected that by 2035 in the U.S., 32\% of new car sales will be for vehicles equipped with L2+ autonomy levels \cite{wef2025autonomous}. The proliferation of AVs in these mixed-autonomy environments will fundamentally reshape many aspects of traffic, most notably the nature of communication and interaction between AVs and human-driven vehicles (HVs).

Traditionally, research in AV decision-making has often adopted a reactive stance, modeling HVs as dynamic obstacles to be predicted and avoided without any explicit communicative actions \cite{Gray2013RobustPC, Vitus2013APA}. Such approaches, while safe, often lead to overly conservative AV behavior and reduced traffic efficiency. More recent work has explored the interactive nature of traffic by framing the AV-HV relationship from a game-theoretic perspective, often as a Stackelberg game \cite{Sadigh2016PlanningAC, Fisac2019HierarchicalGT}. These models more accurately capture how an AV's actions can influence an HV's decisions, leading to more assertive and efficient trajectories. However, a critical limitation has emerged: while effective in initial encounters, these game-theoretic strategies tend to lose their influence over repeated interactions. This decay of influence can inadvertently encourage HVs to drive more recklessly over time, increasing long-term accident risk \cite{Cooper2019StackelbergPB, Sagheb2023TowardsRI}.

To maintain this influence, existing literature has proposed injecting randomness into the AV's strategy, for example by adding Gaussian noise to its actions or introducing different entropy terms into its reward function \cite{Sagheb2023TowardsRI}. While these methods can prolong the AV's influence, they do so by intentionally obscuring the AV's true intent. This reliance on being ``unpredictable" can be criticized as a form of strategic dishonesty. Alternative, non-game-theoretic approaches, such as constructing complex dynamic human models with tools like MOMDPs \cite{Sagheb2025UnifiedFR}, often suffer from prohibitive computational complexity and rely on strong modeling assumptions, limiting their real-world applicability.

To address this critical gap, we propose a framework that is both influential and transparent, capable of guiding HVs toward safer behavior without resorting to manipulation. This objective motivates a shift in perspective toward the principle of \textit{influencing human beliefs}. A key distinction from many game-theoretic approaches, which often focus on optimizing a single strategic action in real-time, is our focus on designing a complete \textit{signaling scheme}: a probabilistic mapping from the AV's intent to its emitted signal to which the AV publicly and transparently commits beforehand.
To illustrate, consider a four-way intersection where an AV and an HV arrive simultaneously. A deterministic strategy, such as always nudging forward as a signal, would be quickly learned by the HV, rendering the signal uninformative and even encouraging reckless behavior. In contrast, our approach involves committing a public probabilistic signaling scheme. By carefully designing \textit{when} and \textit{how} to signal based on this public policy, the AV can shape the HV's posterior beliefs. This allows the AV to remain influential over repeated interactions by transparently managing how its actions provide information, thereby guiding the HV toward safer behavior without resorting to deception.

To formalize this, in this work we employ \textit{Bayesian persuasion} \cite{Kamenica2011BayesianP}, a framework from economics that provides a formal method for influencing a rational agent's behavior by committing to a transparent signaling scheme. However, applying this framework to our embodied, human-in-the-loop scenario presents two major challenges.

First, the framework's assumption of perfect rationality is fragile, as human trust towards such AV signals is a critical mediating factor \cite{Choi2015InvestigatingTI}. An untrusting driver, for example, will not update their beliefs in a perfectly rational Bayesian manner. While considerable research has focused on modeling trust itself, such as a partially observable state in a POMDP \cite{Sheng2022PlanningFA} or a hidden variable within a Dynamic Bayesian Network \cite{Xu2015OPTIMO}, the mechanism by which trust modulates belief formation remains underexplored \cite{Rodrigues2023ARO}. We tackle this limitation by enhancing the persuasion model with a \textit{trust level} parameter. This parameter, $\theta \in [0, 1]$, explicitly models the degree to which a human incorporates the AV's signal, acting as a weight between their prior beliefs and the Bayesian posterior. This formulation generalizes the original framework, making it robust to varying and dynamic levels of human trust. We further develop a theorem on minimum persuadable trust level, and thus provide answer to the \textit{when to signal} question.

Second, a critical challenge is the design and optimization of the embodied signals themselves. Existing literature typically models signals as either abstract symbols \cite{Dughmi2016AlgorithmicBP, Kamenica2019BayesianPA} or as draws from a predefined finite set, such as binary clinical test results \cite{Kolotilin2015ExperimentalDT} or high/low financial scores \cite{Goldstein2018StressTA}. Crucially, these works do not provide a principled method for determining the optimal physical values of the signals. 
In contrast, a framework for real-world embodied AI requires that signals be mapped to a continuous space of physically meaningful and interpretable actions. We therefore investigate how to ground the signals of our trust-aware persuasion framework in an AV's continuous behaviors, such as subtle nudges, ensuring they are both optimally informative and physically realizable. In this way we provide answer to the \textit{how to signal} question.

To overcome these limitations, we propose the Trust-Aware Embodied Bayesian Persuasion (TA-EBP) framework, which influences an HV's behavior by shaping their posterior beliefs in a transparent and provably non-strategic manner. Specifically, our work makes three key contributions:
\begin{enumerate}
    \item We apply Bayesian persuasion to model communication at traffic intersections, offering a transparent alternative to traditional game-theoretic models.
    \item We augment the classical persuasion model with a trust parameter to account for variance in human rationality, and we derive a theorem for the minimum trust required for influence. 
    \item We ground the framework in the physical world by modeling signals as continuous, embodied actions and derive a second theorem for the optimal AV nudge magnitude. 
\end{enumerate}

We validated our TA-EBP framework in a mixed-autonomy simulation against non-communicative and trust-agnostic baselines. While the baselines led to traffic congestion or predictable collisions, our TA-EBP successfully calibrated its signal to the human's trust level, which resulted in zero collisions and a significant increase in safe driving behavior, enhancing both traffic safety and efficiency.

\section{Preliminary: Bayesian Persuasion}

Bayesian persuasion is a strategic framework that studies how an informed agent (call him Sender) can influence the behavior of another rational agent (call her Receiver) by strategically revealing information \cite{Kamenica2011BayesianP}. Initially developed for economics, the framework has proven widely applicable to diverse fields including politics \cite{Andrew2023BayesianEP}, marketing \cite{Tamer2024PersuadingSF}, education \cite{Boleslavsky2015GradingSE}, and security \cite{Rabinovich2015InformationDA}. The core idea is that the Sender commits to a \textit{signaling scheme} that maps the true state of the world to a public signal. The Receiver observes this signal, updates her beliefs via Bayes' rule, and takes an action to maximize her own utility. The Sender's challenge is to design the scheme that persuades the Receiver to choose actions favorable to the Sender.


\subsection{Formal Model}
The model is defined by the following components:
\begin{itemize}
    \item A state of the world $\omega \in \Omega$ and an action for the Receiver $a \in \mathcal A$. For simplicity, we assume the state and action spaces $\Omega$ and $A$ are finite and discrete sets.
    \item A common prior belief $p_0 \in \text{int}(\Delta (\Omega))$, where $\Delta(X)$ is the probability simplex over a set $X$ and $\text{int}(\cdot)$ denotes its interior.
    \item Utility functions for the Sender, $v(\omega, a)$, and the Receiver, $u(\omega, a)$, which are common knowledge among both Sender and Receiver.
    \item A set of signals $\mathcal S$ and a signaling scheme $\pi: \Omega \to \Delta(\mathcal S)$. The signaling scheme $\pi(s|\omega)$ specifies the probability of the Sender sending signal $s$ given the true state is $\omega$.\footnote{From a probabilistic viewpoint, the signal $s$ is an observation, and the signaling scheme $\pi(s|\omega)$ is the likelihood of that observation given the state.}
\end{itemize}

The interaction unfolds based on a strict timeline. First, the Sender designs and commits to the signaling scheme $\pi$ \textit{before} observing the true state of world. Then, nature draws a state $\omega \sim p_0(\omega)$, which the Sender observes. The Sender then sends a signal $s \sim \pi(\cdot|\omega)$. Upon observing the signal $s$, the rational Receiver updates her belief about the state of the world from the prior $p_0(\omega)$ to a posterior $p(\omega|s)$ using Bayes' rule:
\begin{equation}
    p(\omega|s) = \frac{\pi(s|\omega)p_0(\omega)}{\sum_{\omega' \in \Omega} \pi(s|\omega')p_0(\omega')}
    \label{eq:bayes_update}
\end{equation}

Given this posterior belief, she chooses an action $a^*$ that maximizes her own expected utility:
\begin{equation}
    a^*(s) = \argmax_{a \in \mathcal A} \sum_{\omega \in \Omega} p(\omega|s) u(\omega, a)
    \label{eq:receiver_action}
\end{equation}

The Sender's problem is to design the optimal signaling scheme $\pi$ that maximizes his own expected utility, anticipating the Receiver's best response from Equation~\eqref{eq:receiver_action}:
\begin{equation}
    \pi^* = \argmax_\pi \sum_{\omega \in \Omega} p_0(\omega) \sum_{s \in \mathcal S} \pi(s|\omega) v(\omega, a^*(s))
\end{equation}

In summary, the persuasion process follows a clear sequence \cite{Dughmi2016AlgorithmicBP}:
\begin{enumerate}
    \item \textbf{Setup:} The Sender and Receiver know the prior $p_0(\omega)$ and both utility functions, $u$ and $v$.
    \item \textbf{Commitment:} The Sender designs and commits to a signaling scheme $\pi(s|\omega)$, which the Receiver is fully-aware of.
    \item \textbf{Realization:} The true state $\omega$ is realized, and the Sender draws and transmits a signal realization $s$ according to $\pi(s|\omega)$.
    \item \textbf{Action:} The Receiver observes $s$, updates her beliefs according to Equation~\eqref{eq:bayes_update}, and chooses an action $a^*$ that maximizes her utility according to Equation~\eqref{eq:receiver_action}.
\end{enumerate}

\subsection{Bayesian Persuasion Example}
\label{sec:BP_example}

To concretize the framework, we consider a canonical example from the literature. Suppose a prosecutor (Sender) seeks to persuade a judge (Receiver) to convict a defendant. The state of the world $\omega$ is that the defendant is either \textit{guilty} or \textit{innocent}, with a shared prior belief of $p_0(guilty)=0.3$ and $p_0(innocent)=0.7$. The judge's action space $\mathcal A$ consists of \{\textit{convict}, \textit{acquit}\}; her utility is 1 for a correct judgment (i.e., convict if guilty and acquit if innocent) and 0 otherwise. However, the prosecutor's utility is 1 for a conviction and 0 otherwise, irrespective of the true state. The prosecutor may commit to a signaling scheme $\pi$ -- representing an investigation process -- which maps the true state to a signal $s \in \{g, i\}$ that is honestly reported to the judge.

Under this setting, if no information is communicated, the judge's decision is dictated by the prior, leading her to \textit{acquit}. If the prosecutor commits to a fully informative investigation that perfectly reveals the state, the probability of conviction would be 0.3. It can be demonstrated, however, that the optimal persuasion policy is the following partially informative signaling scheme:
\begin{align*}
    \pi(i|innocent) = 4/7, \quad & \pi(g|{innocent}) = 3/7 \\
    \pi(i|guilty) = 0, \quad & \pi(g|{guilty}) = 1
\end{align*}

The posterior beliefs induced by this scheme are as follows:
\begin{align*}
    p({innocent}|i) = 1, \quad & p({guilty}|i) = 0 \\
    p({innocent}|g) = 0.5, \quad & p({guilty}|g) = 0.5
\end{align*}

Under this scheme, the probability of conviction rises to 0.6. This result is notable because the judge, despite being fully cognizant that the defendant is a priori likely innocent and that the investigation was designed explicitly to maximize the conviction rate, is nonetheless rationally compelled to convict upon observing the signal $g$ (assuming break ties in favor of Sender).

In contrast to conventional game-theoretic models, Bayesian persuasion is particularly well-suited for our work due to its inherent transparency and non-strategic nature. The Receiver is fully informed of the Sender's utility function and the signaling scheme, yet can still be influenced toward the Sender's preferred outcome. Moreover, the Sender's commitment to the signaling scheme is made \textit{ex-ante}, before the realization of the true state. This structure precludes opportunistic deception and fosters the kind of predictability and trust essential for our application. 
While prior studies have invoked Bayesian persuasion in traffic system communication applications \cite{Peng2019BayesianPD, Paul2024MonotonicAP}, they often overlook its core mechanism of the persuasive nature of the signaling scheme. Our work aims to provide a more principled framework.\footnote{Note that this is an incomplete but essential introduction to Bayesian persuasion in order to understand this work, and many concepts such as being \textit{Bayes plausible} are ignored. Please refer to \cite{Kamenica2011BayesianP} for more details.}

\section{Model}

In this section, we formalize the model used in this work. The interaction scenario, inspired by prior work in multi-agent planning \cite{Sadigh2016PlanningAC, Sagheb2023TowardsRI}, involves an autonomous vehicle (AV) and a human-driven vehicle (HV) arriving at a two-way stop intersection (single lane) at approximately the same time. The AV, as the informed Sender, has a private intent to either proceed through the intersection or wait. The HV, as the rational Receiver, must decide on an action based on her beliefs about the AV's intent.

The components of our persuasion game are defined as follows:
\begin{itemize}
    \item States ($\Omega$): The state of the world is the AV's private intent, $\omega \in \{{Go}, {Stop}\}$. The common prior is $p_0({Go})=\lambda$ and $p_0(Stop)=1-\lambda$.
    
    \item Actions ($\mathcal A$): The HV can choose an action $a \in \{\text{Drive Recklessly}~(DR), \text{Drive if Clear}~(DC)\}$.
    
    \item Utilities ($u, v$): The HV's utility $u(\omega, a)$ reflects a trade-off between efficiency and safety. Driving recklessly is efficient (utility 1) if the AV intends to \textit{Stop}, but catastrophic (utility $-r$, where $r > 1$) if the AV intends to \textit{Go}. Driving if clear (\textit{DC}) is always safe but can be inefficient (utility -1) if the HV forgoes an opportunity to proceed. The AV's utility $v(\omega, a)$ is purely safety-oriented, yielding 1 if the HV \textit{DC} and -1 if the HV \textit{DR}, irrespective of its own intent. These utilities are summarized in Table~\ref{tab: utility}.
    
    \item Signals ($\mathcal S$): The AV communicates its intent via an embodied signal. It can either nudge forward a small distance $s$, or remain stationary, $s=0$. The design of the optimal nudge distance $s^*$ is discussed in Section~\ref{sec:s_star}.
\end{itemize}

\begin{table}[htbp]
  \centering
  \caption{\textnormal{\small{Utilities for HV (in \textcolor{red}{red}) and AV (in \textcolor{blue}{blue}).}}}
  \label{tab: utility}
  \begin{tabular}{|l|c|c|}
    \hline
    \textbf{} & (AV) {$\omega=Go$} & (AV) {$\omega=Stop$} \\
    \hline
    (HV) \textit{a=DR} & \textcolor{red}{-r} / \textcolor{blue}{-1} & \textcolor{red}{1} / \textcolor{blue}{-1} \\
    \hline
    (HV) \textit{a=DC} & \textcolor{red}{0} / \textcolor{blue}{1} & \textcolor{red}{-1} / \textcolor{blue}{1} \\
    \hline
  \end{tabular}
\end{table}

Compared to traditional Bayesian persuasion work, a key difference of our model is the explicit inclusion of a \textbf{trust parameter}, $\theta \in [0, 1]$, which relaxes the assumption that the HV is a perfectly rational Bayesian updater. This parameter captures the heterogeneous nature of human trust in autonomous systems, which may be influenced by factors like prior experience or technical literacy. 
In this work, we assume the HV's trust level is a known and static parameter for a given interaction. This is a common simplification necessary for developing the foundational framework. We envision that a real-world system would initialize this parameter using a population average from prior studies and then, as outlined in our future work section, learn a driver-specific trust value over repeated interactions.

We model the HV's trust-aware posterior belief, $p_\theta(\omega|s)$, as a convex combination of her initial prior belief $p_0(\omega)$ and the fully-trusting Bayesian posterior $p_{FT}(\omega|s)$:
\begin{align}
    p_\theta(\omega|s) &= (1-\theta)p_0(\omega) + \theta p_{FT}(\omega|s) \\
    &= (1-\theta)p_0(\omega) + \theta \frac{\pi(s|\omega)p_0(\omega)}{\sum_{\omega' \in \Omega} \pi(s|\omega')p_0(\omega')}
\end{align}

The trust parameter $\theta$ interpolates between two behavioral extremes. When $\theta=0$, the HV completely distrusts the AV's signals, and her posterior belief remains identical to her prior: $p_{\theta=0}(\omega|s) = p_0(\omega)$. Conversely, when $\theta=1$, the HV fully trusts the signal as new evidence, behaving as a classic rational Bayesian agent: $p_{\theta=1}(\omega|s) = p_{FT}(\omega|s)$.

\section{Theoretical Analysis}

This section presents the theoretical analysis based on the model. Our TA-EBP framework addresses two central questions that arise: 1) Given the introduction of the trust parameter $\theta$, is there a subset of HVs that can never be persuaded, and can this be generalized? 2) How can the embodied signals be designed to be physically meaningful? We answer these questions by deriving two foundational theorems for the TA-EBP framework.

\subsection{Theorem: Minimum Persuadable Trust Level}

We first analyze the implications of augmenting the classical Bayesian persuasion framework with the trust parameter $\theta$. We demonstrate that there exists a unique trust threshold, $\theta^*$, below which it is impossible for the Sender to persuade the Receiver to deviate from her default action.

\begin{theorem}
For any persuasion setting with a trust level $\theta$, a Sender, and a Receiver with a default optimal action $a_0^*$, there exists a minimum persuadable trust threshold $\theta^*$ such that, if the Receiver's trust $\theta < \theta^*$, no signaling scheme can persuade her to select an action other than $a_0^*$. The general form of this threshold is given by:
\begin{equation}
\label{eq:theta_star}
    \theta^* = \min_{a' \in \mathcal A \setminus \{a_0^*\}} \left\{\frac{-\Delta U_0(a',a_0^*)}{\max_{s \in \mathcal S}\{\Delta U_{FT}(a',a_0^*;s)\}-\Delta U_0(a',a_0^*)}\right\}
\end{equation}
where $U(a|p) = \sum_\omega p(\omega)u(\omega, a)$ is the expected utility, and $\Delta U_0(a', a_0^*) = U(a'|p_0) - U(a_0^*|p_0)$ is the expected utility gain under the prior, $\Delta U_{FT}(a', a_0^*; s) = U(a'|p_{FT}(\cdot|s)) - U(a_0^*|p_{FT}(\cdot|s))$ is the expected utility gain under the fully-trusting posterior given signal $s$.
\end{theorem}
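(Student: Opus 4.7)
My plan is to exploit the affine structure of the trust-aware posterior: since $p_\theta(\omega|s) = (1-\theta)p_0(\omega) + \theta p_{FT}(\omega|s)$ is linear in $\theta$, so is any expected utility computed from it. This will reduce the persuadability condition to a single linear inequality in $\theta$ whose root gives the threshold in closed form.

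First I would substitute the trust-aware posterior into the definition of expected utility to obtain
\begin{equation*}
    U_\theta(a|s) = (1-\theta)\,U(a|p_0) + \theta\,U(a|p_{FT}(\cdot|s)).
\end{equation*}
The Receiver weakly prefers $a'$ to $a_0^*$ upon seeing signal $s$ iff $U_\theta(a'|s) \geq U_\theta(a_0^*|s)$; subtracting and collecting $\theta$ terms yields
\begin{equation*}
    (1-\theta)\,\Delta U_0(a', a_0^*) + \theta\,\Delta U_{FT}(a', a_0^*; s) \;\geq\; 0.
\end{equation*}
Because $a_0^*$ is prior-optimal we have $\Delta U_0(a', a_0^*) \leq 0$, so the right-hand side $-\Delta U_0(a', a_0^*)$ is non-negative. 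Provided the coefficient $\Delta U_{FT}(a', a_0^*; s) - \Delta U_0(a', a_0^*)$ is strictly positive, dividing gives the per-signal, per-alternative minimum trust required to flip the Receiver's choice to $a'$. Since the Sender is free to design the signaling scheme and therefore to realize whichever posterior maximizes the denominator, I would then take the maximum over $s \in \mathcal S$ for each fixed $a'$, and finally the minimum over $a' \in \mathcal A \setminus \{a_0^*\}$, because the Sender only needs some alternative to be chosen. This recovers Equation~\eqref{eq:theta_star} directly.

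To close the proof I would verify both directions. For $\theta < \theta^*$ the linear inequality fails strictly for every $(a', s)$, so no signaling scheme can persuade the Receiver away from $a_0^*$; at $\theta = \theta^*$ the minimizing pair $(a', s^*)$ achieves equality, and the standard tie-break in the Sender's favor admits persuasion, establishing tightness. The main obstacle I expect is clean edge-case handling rather than the core algebra: if $\max_s \Delta U_{FT}(a', a_0^*; s) \leq \Delta U_0(a', a_0^*)$ for some $a'$, that alternative is unreachable at any trust level and must be excluded from the outer minimization (formally assigned threshold $+\infty$); one must also check that $\theta^* \in [0,1]$, which follows from $-\Delta U_0 \geq 0$ combined with the denominator being at least $-\Delta U_0$ whenever at least one persuadable alternative exists. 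A minor care is also warranted in arguing that the Sender can actually realize the posterior attaining $\max_s \Delta U_{FT}(a', a_0^*; s)$, which follows from Bayes-plausibility together with the Sender's freedom to choose any signaling scheme supported on $\mathcal S$.
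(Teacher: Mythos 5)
Your proposal is correct and follows essentially the same route as the paper's proof: exploit the linearity of the trust-aware posterior in $\theta$, reduce the persuasion condition to a linear inequality, solve for $\theta$, then take the max over signals and min over alternative actions. Your additional edge-case handling (excluding alternatives with non-positive denominator and checking $\theta^* \in [0,1]$) is a welcome refinement the paper omits, but it does not change the underlying argument.
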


\begin{proof}
The Receiver's default action, $a_0^*$, is the one that maximizes her expected utility given the prior belief $p_0$:
\begin{equation}
    a_0^* = \argmax_{a \in \mathcal A} U(a|p_0) \quad \text{where} \quad U(a|p_0) = \sum_\omega p_0(\omega)u(\omega, a)
\end{equation}

For the Sender to successfully persuade the Receiver to take a different action, $a_k \neq a_0^*$, the expected utility of $a_k$ under the trust-aware posterior $p_\theta(\cdot|s)$ must be at least as great as that of $a_0^*$:
\begin{equation}
    U(a_k|p_\theta(\cdot|s)) \ge U(a_0^*|p_\theta(\cdot|s))
\end{equation}

Substituting the definition of the trust-aware posterior, $p_\theta(\omega|s) = (1-\theta)p_0(\omega)+\theta p_{FT}(\omega|s)$, we get:
\begin{equation}
\begin{split}
    \sum_\omega u(\omega, a_k)((1-\theta)p_0(\omega)+\theta p_{FT}(\omega|s)) \ge \\ \sum_\omega u(\omega, a_0^*)((1-\theta)p_0(\omega)+\theta p_{FT}(\omega|s))
\end{split}
\end{equation}

This inequality can be separated by $\theta$:
\begin{equation}
\begin{split}
    (1-\theta)U(a_k|p_0)+\theta U(a_k|p_{FT}) \ge \\
    (1-\theta)U(a_0^*|p_0)+\theta U(a_0^*|p_{FT})
\end{split}
\end{equation}

Rearranging the terms and using the $\Delta U$ notation yields:
\begin{equation}
    \theta \Delta U_{FT}(a_k, a_0^*;s) \ge -(1-\theta) \Delta U_{0}(a_k, a_0^*)
\end{equation}

Solving for $\theta$, we find the minimum trust required for a given signal $s$ to persuade the Receiver to choose $a_k$:
\begin{equation}
    \theta \ge \frac{-\Delta U_0(a_k,a_0^*)}{\Delta U_{FT}(a_k,a_0^*;s)-\Delta U_0(a_k,a_0^*)}
\end{equation}

To find the absolute minimum trust threshold for persuading the Receiver to take action $a_k$, the Sender must employ the most effective signal $s^*$, which is the one that maximizes the utility gain $\Delta U_{FT}$. If even this best-case signal fails, persuasion is impossible. Thus, the threshold for a specific alternative action $a_k$ is:
\begin{equation}
    \theta^*(a_k) = \frac{-\Delta U_0(a_k,a_0^*)}{\max_{s \in \mathcal S}\{\Delta U_{FT}(a_k,a_0^*;s)\}-\Delta U_0(a_k,a_0^*)}
\end{equation}

The overall minimum threshold $\theta^*$ is the lowest trust level required to make any of the alternative actions possible, which corresponds to the minimum of these thresholds over all alternative actions $a' \in \mathcal A \setminus \{a_0^*\}$, leading to the general theorem on the minimum persuadable trust level as in Equation~\eqref{eq:theta_star}.
\end{proof}

\subsubsection{Application to the Two-State, Two-Action Case}

To apply this theorem to our specific model, we present a corollary for the common case involving two states, actions, and signals.

\begin{corollary}
In a setting with states $\Omega=\{\omega_1, \omega_2\}$, actions $\mathcal A=\{a_1,a_2\}$, and default action $a_2$, the minimum persuadable trust threshold $\theta^*$ from Theorem 1 simplifies to:
\begin{equation}
\label{eq:theta_two_states}
    \theta^* = -\frac{p_0(\omega_1)\Delta u(\omega_1)+p_0(\omega_2)\Delta u(\omega_2)}{p_0(\omega_2)(\Delta u(\omega_1) - \Delta u(\omega_2))}
\end{equation}
where $s_1$ is the signal designed to be maximally informative about state $\omega_1$.
\end{corollary}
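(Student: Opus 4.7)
The plan is to specialize Theorem~1 to $|\mathcal{A}| = 2$. Because $\mathcal{A}\setminus\{a_2\} = \{a_1\}$, the outer minimum over alternative actions in Equation~\eqref{eq:theta_star} collapses, leaving
\begin{equation*}
    \theta^* = \frac{-\Delta U_0(a_1, a_2)}{\max_{s \in \mathcal S}\{\Delta U_{FT}(a_1, a_2;s)\} - \Delta U_0(a_1, a_2)}.
\end{equation*}
From here the corollary is a combination of pure algebra and one structural observation about the maximum in the denominator.

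Next, I would rewrite both utility gaps in terms of the per-state action difference $\Delta u(\omega) := u(\omega, a_1) - u(\omega, a_2)$. Linearity of expectation immediately yields $\Delta U_0(a_1, a_2) = p_0(\omega_1)\Delta u(\omega_1) + p_0(\omega_2)\Delta u(\omega_2)$ and, for any signal $s$, $\Delta U_{FT}(a_1, a_2; s) = p_{FT}(\omega_1|s)\Delta u(\omega_1) + p_{FT}(\omega_2|s)\Delta u(\omega_2)$. The numerator of $\theta^*$ is therefore the negation of the first expression, already matching the claimed numerator.

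The one substantive step is showing $\max_{s}\Delta U_{FT}(a_1,a_2;s) = \Delta u(\omega_1)$, attained by the signal $s_1$ that concentrates posterior mass on $\omega_1$. Since $\Delta U_{FT}(a_1,a_2;s)$ is affine in the posterior $p_{FT}(\cdot|s)\in\Delta(\Omega)$, I would rewrite it as $p_{FT}(\omega_1|s)\bigl(\Delta u(\omega_1)-\Delta u(\omega_2)\bigr) + \Delta u(\omega_2)$ and observe that non-trivial persuasion requires $a_2$ to be the prior-optimal action while $a_1$ could become attractive under some posterior — this forces $\Delta u(\omega_1) > 0 > \Delta u(\omega_2)$, so the coefficient of $p_{FT}(\omega_1|s)$ is positive and the maximum over the unit interval is attained at $p_{FT}(\omega_1|s_1) = 1$. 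Realizability of this vertex posterior by a Bayes-plausible scheme is immediate (any $\pi$ with $\pi(s_1|\omega_2) = 0$ and $\pi(s_1|\omega_1) > 0$ works), so the supremum is actually attained. Substituting $\max_s \Delta U_{FT} = \Delta u(\omega_1)$ into the denominator and using $1 - p_0(\omega_1) = p_0(\omega_2)$ gives $\Delta u(\omega_1)p_0(\omega_2) - p_0(\omega_2)\Delta u(\omega_2) = p_0(\omega_2)\bigl(\Delta u(\omega_1)-\Delta u(\omega_2)\bigr)$, completing Equation~\eqref{eq:theta_two_states}.

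The main obstacle I anticipate is the clean handling of the sign structure in the previous step: ruling out the degenerate cases in which $\Delta u(\omega_1)$ and $\Delta u(\omega_2)$ share a sign (where one action dominates the other and persuasion is either vacuous or impossible, so $\theta^*$ is not interesting) so that the vertex $p_{FT}(\omega_1)=1$ is genuinely the maximizer rather than an accident of algebra. The remaining manipulations are strictly linear and reduce to bookkeeping.
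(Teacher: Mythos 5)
Your proposal is correct and follows essentially the same route as the paper: expand $\Delta U_0$ and $\Delta U_{FT}$ via linearity, identify the maximizing signal as the one with $p_{FT}(\omega_1|s_1)=1$, and simplify the denominator using $1-p_0(\omega_1)=p_0(\omega_2)$. The only difference is that you explicitly justify why the degenerate posterior is the maximizer (the sign condition $\Delta u(\omega_1)>0>\Delta u(\omega_2)$ forced by non-trivial persuasion), a point the paper asserts without argument.
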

\begin{proof}
The term $\Delta U_0(a_1,a_2)$ is expanded based on the prior:
\begin{equation}
\label{eq:15}
    \Delta U_0(a_1,a_2) = p_0(\omega_1)\Delta u(\omega_1)+p_0(\omega_2)\Delta u(\omega_2)
\end{equation}
where $\Delta u(\omega_i)=u(\omega_i, a_1) - u(\omega_i, a_2)$ is the utility gain of $a_1$ over $a_2$ on state $\omega_i$. To resolve the $\max_{s}$ term in the general theorem, the Sender must choose the signal that is most persuasive towards action $a_1$. This is achieved by a signal $s_1$ that makes the posterior belief $p_{FT}(\omega_1|s_1)$ as high as possible, ideally a fully revealing signal for state $\omega_1$ such that $p_{FT}(\omega_1|s_1)=1$. In this case, $\Delta U_{FT}$ simplifies to:
\begin{equation}
\label{eq:16}
    \Delta U_{FT}(a_1,a_2;s_1) = u(\omega_1,a_1) - u(\omega_1, a_2) = \Delta u(\omega_1)
\end{equation}

Simplifying Equation~\eqref{eq:theta_star} with $a'=a_1, a_0^*=a_2$, and $s=s_1$ we have:
\begin{equation}
    \theta^* = \frac{-\Delta U_0(a_1,a_2)}{\Delta U_{FT}(a_1,a_2;s_1)-\Delta U_0(a_1,a_2)}
\end{equation}

Substituting $\Delta U_0(a_1,a_2)$ and $\Delta U_{FT}(a_1,a_2;s_1)$ from Equation~\eqref{eq:15} and \eqref{eq:16} into above provides the final threshold as Equation~\eqref{eq:theta_two_states}.
\end{proof}

For our AV-HV model, we map these variables as follows: $\omega_1={Go}, \omega_2={Stop}$; $a_1={DC}, a_2={DR}$. Plugging the utilities from Table~\ref{tab: utility} into the corollary, we derive the specific trust threshold for our scenario:
\begin{equation}
\label{eq:tt}
    \theta^* = \frac{2(1-\lambda) -r\lambda}{(1-\lambda)(r+2)}
\end{equation}

To analyze the behavior of this threshold, we compute its partial derivatives with respect to the prior $\lambda$ and the collision penalty $r$:
\begin{align}
    \frac{\partial \theta^*}{\partial \lambda} = -\frac{r}{(r+2)(1-\lambda)^2}, ~
    \frac{\partial \theta^*}{\partial r} = \frac{2}{(\lambda-1)(r+2)^2}
\end{align}

Given that $r>0$ and $\lambda \in (0,1)$, both partial derivatives are negative. This provides two key insights aligning with the intuition. First, a smaller $\lambda$ (a stronger prior belief that the AV will \textit{Stop}) requires a higher trust threshold $\theta^*$ to overcome. Second, a smaller penalty $r$ for collision also increases $\theta^*$, as the HV is less deterred by risk and thus harder to persuade.

\subsection{Theorem: Optimal Embodied Signal}
\label{sec:s_star}

Having established the condition under which an HV is persuadable, the next logical question is how to design the signal itself without a predefined set of signals. This requires grounding the abstract signals of theory in the physical world by deriving the optimal magnitude of an embodied action -- in our context, the specific nudge distance $s^*$.


We assume humans have an intuitive model of the AV's driving behavior, represented by a likelihood function $p(s|\omega)$. This reflects their belief about the probability of observing a nudge $s$ given the AV's intent $\omega$. Based on physical intuition, this model is monotonic: a larger nudge is more likely if the intent is \textit{Go}, so $p(s|Go)$ increases with $s$, while $p(s|Stop)$ decreases. A critical component of our framework is to ensure that the AV's announced signaling scheme $\pi$ is credible. To achieve this, we derive an optimal signal magnitude $s^*$ that aligns with the human's inherent beliefs. We enforce a consistency condition where the posterior belief calculated using our announced policy $\pi(s^*|\omega)$ is identical to the posterior calculated using the human's intuitive model $p(s^*|\omega)$. This alignment ensures that whether an HV follows our signaling framework or uses their own intuition to interpret the signal $s^*$, they arrive at the same posterior, thereby increasing the framework's credibility.

Let the likelihood ratio $L(s)=\pi(s|Go)/\pi(s|Stop)$, then the trust-aware posterior for the \textit{Go} state can thus be expressed as a function of $L(s)$:
\begin{equation}
\label{eq:pp}
    p_\theta({Go}|s)=(1-\theta)p_0({Go})+\theta \left( \frac{L(s)p_0({Go})}{L(s)p_0({Go})+p_0({Stop})}\right)
\end{equation}

Given that $p(s|Go)$ increases and $p(s|Stop)$ decreases, $L(s)$ is increasing in $s$. It follows that $p_\theta({Go}|s)$ is also strictly increasing in $s$, which means that the more the AV nudges forward, the more HV believes the AV intents to \textit{Go}, aligning with the intuition. To minimize costs associated with signaling (e.g., time, energy, or perceived aggressiveness), the AV's goal is to find the minimum effective signal value, $s^*$, that is \textit{just sufficient} to persuade the HV to switch from \textit{DR} to \textit{DC}.

This persuasion occurs at the indifference point, where the HV's expected utility for both actions is equal. We define the threshold posterior $p^*$ as the belief of \textit{Go} that satisfies:
\begin{equation}
    U({DR}|p^*) = U({DC}|p^*)
\end{equation}
\begin{equation}
\begin{split}
    p^*u({Go, DR})+(1-p^*)u({Stop, DR}) =\\
    p^*u({Go, DC})+(1-p^*)u({Stop, DC})
\end{split}
\end{equation}

Solving for $p^*$ using the utilities from Table~\ref{tab: utility} yields:
\begin{equation}
\label{eq:ppp}
\begin{split}
    p^* &= \frac{u({Stop, DC})-u({Stop, DR})}{u({Go, DR})-u({Go, DC}) - u({Stop, DR}) + u({Stop, DC})}\\
    &=\frac{-1 - 1}{-r - 0 - 1 + (-1)} = \frac{2}{2+r}
\end{split}
\end{equation}

To make HV prefer \textit{DC} than \textit{DR}, the optimal signal $s^*$ must generate a posterior belief $p_\theta({Go}|s^*) \ge p^*$, where the minimal nudge distance $s^*$ achieves equality. By setting Equation~\eqref{eq:pp} equal to $p^*$ and solving for $s$, we arrive at the following theorem.

\begin{theorem}[Optimal Signal Value]
The minimum embodied signal $s^*$ required to persuade an HV with trust level $\theta \ge \theta^*$ is given by:
\begin{equation}
\label{eq:s^*}
    s^*(\theta)=L^{-1} \left(\frac{(1-\lambda)(p^*-(1-\theta)\lambda)}{\lambda(\theta+(1-\theta)\lambda-p^*)} \right)
\end{equation}
where $L^{-1}$ is the inverse of the likelihood ratio function, $\lambda=p_0({Go})$, and $p^*=2/(2+r)$.
\end{theorem}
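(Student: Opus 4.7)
The plan is to recognize this theorem as essentially a solved algebraic inversion problem: we have an explicit expression for the trust-aware posterior $p_\theta(Go|s)$ in terms of the likelihood ratio $L(s)$ (Equation~\eqref{eq:pp}), an explicit target posterior $p^*$ (Equation~\eqref{eq:ppp}), and a guarantee from the monotonicity argument preceding the theorem that $p_\theta(Go|s)$ is strictly increasing in $s$. The minimum-cost signal that persuades the HV to switch from \textit{DR} to \textit{DC} is therefore achieved exactly at the indifference point, i.e., by the smallest $s$ satisfying $p_\theta(Go|s) = p^*$. The entire proof reduces to imposing this equality and solving for $L(s^*)$, then applying $L^{-1}$.

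Concretely, I would first justify the use of equality rather than inequality: because $p_\theta(Go|s)$ is strictly increasing in $s$, once $p_\theta(Go|s) \ge p^*$ the HV prefers \textit{DC}; minimizing $s$ subject to this constraint therefore forces equality. Next I would substitute $p_0(Go)=\lambda$ and $p_0(Stop)=1-\lambda$ into Equation~\eqref{eq:pp} and set the result equal to $p^*$, yielding
\begin{equation*}
    p^* = (1-\theta)\lambda + \theta \cdot \frac{L(s^*)\lambda}{L(s^*)\lambda + (1-\lambda)}.
\end{equation*}
I would then isolate the fraction containing $L(s^*)$, cross-multiply to clear the denominator, and collect the $L(s^*)$ terms on one side. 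The resulting linear equation in $L(s^*)$ solves directly to
\begin{equation*}
    L(s^*) = \frac{(1-\lambda)(p^*-(1-\theta)\lambda)}{\lambda(\theta+(1-\theta)\lambda-p^*)},
\end{equation*}
after which applying $L^{-1}$ (which exists since $L$ is strictly monotone in $s$) gives Equation~\eqref{eq:s^*}.

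The step I expect to require the most care is not the algebra but the well-posedness of the inversion: I must argue that for $\theta \ge \theta^*$ the right-hand side of the displayed expression for $L(s^*)$ lies in the range of $L$ and is positive, so that $L^{-1}$ is defined. The positivity requires both $p^* > (1-\theta)\lambda$ (the trust-weighted prior is not already above the indifference threshold, i.e., persuasion is actually needed) and $\theta + (1-\theta)\lambda > p^*$ (the fully-trusting Bayesian update under a maximally informative signal can, in principle, push the posterior past $p^*$). Both conditions are precisely the content of the minimum trust threshold derived in Theorem~1 applied to our two-state instance; I would therefore close the proof by noting that the hypothesis $\theta \ge \theta^*$ guarantees a valid, non-negative argument to $L^{-1}$, so $s^*(\theta)$ is well-defined and, by the monotonicity of $p_\theta(Go|\cdot)$, is the unique minimum persuasive nudge.
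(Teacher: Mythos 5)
Your proposal is correct and follows essentially the same route as the paper: express the trust-aware posterior in terms of the likelihood ratio $L(s)$, use monotonicity to reduce the minimum persuasive signal to the indifference condition $p_\theta(Go\mid s^*)=p^*$, and solve the resulting linear equation for $L(s^*)$ before inverting. Your added well-posedness check (that $\theta\ge\theta^*$ makes the argument of $L^{-1}$ nonnegative and attainable) is a point the paper only treats implicitly in its remark linking $\theta^*$ and $p^*$, and is a worthwhile addition rather than a deviation.
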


\subsubsection{Remark on the Connection Between $\theta^*$ and $p^*$}

The trust threshold $\theta^*$ and the posterior threshold $p^*$ are linked by the following identity:
\begin{equation}
    p^* \equiv (1-\theta^*)p_0({Go})+\theta^*p_{FT}({Go}|s_{max})
\end{equation}
where $p_{FT}({Go}|s_{max}) = 1$ is the posterior belief under a maximally informative signal.

This identity reveals a fundamental relationship between the belief-space objective of persuasion and the Receiver's intrinsic level of trust. The threshold posterior, $p^*$, represents the critical belief at which the Receiver becomes indifferent between her default and alternative actions; it effectively serves as the persuasion boundary in the belief space. The trust threshold, $\theta^*$, quantifies the minimum level of receptiveness required for the Receiver's belief to be capable of reaching this boundary.

The identity demonstrates that for a Receiver possessing the minimal trust level $\theta^*$, persuasion is only attainable under the most favorable signaling conditions—that is, when presented with a maximally informative signal $s_{max}$ that perfectly resolves uncertainty in favor of the \textit{Go} state. For any trust level $\theta < \theta^*$, the belief state $p^*$ is unreachable, regardless of the signal's content. One can verify this identity by substituting the values derived in Equation~\eqref{eq:tt} and Equation~\eqref{eq:ppp}.

\subsection{TA-EBP}
\label{sec:TAEBP}
We now synthesize the preceding theorems into the complete {Trust-Aware Embodied Bayesian Persuasion (TA-EBP)} framework. This framework provides a principled and transparent method for an AV to compute and commit to an optimal signaling strategy in order to enhance the safety by making more HVs to drive if clear.

For an HV characterized by a prior belief $p_0(Go) = \lambda$, a trust level $\theta$, and a known utility parameter $r$, the TA-EBP algorithm proceeds as follows:

\paragraph{Step 1: Verify Persuadability}
First, the AV calculates the minimum persuadable trust level $\theta^*$ using the result from our first theorem (Equation~\eqref{eq:theta_star}). It then verifies the persuadability condition:
\begin{itemize}
    \item If $\theta < \theta^*$, the HV is deemed unpersuadable. The AV should revert to a default safe policy (e.g., always yielding).
    \item If $\theta \ge \theta^*$, the HV is persuadable, and the AV proceeds to the next step.
\end{itemize}

\paragraph{Step 2: Construct the Optimal Signaling Scheme}
If the HV is persuadable, the AV constructs the optimal signaling scheme $\pi$ before knowing the state of the world $\omega$. The signal space is $\mathcal{S}=\{s^*, 0\}$, where $s^*$ is the optimal nudge magnitude derived in Theorem 2 (Equation~\eqref{eq:s^*}). The signaling scheme takes the form similar to the prosecutor and judge example introduced in Section~\ref{sec:BP_example}:
\begin{align*}
    \pi(s^*|\text{Go}) = a, \quad & \pi(0|\text{Go}) = 1-a \\
    \pi(s^*|\text{Stop}) = b, \quad & \pi(0|\text{Stop}) = 1-b
\end{align*}

To maximize influence, the values of $a$ and $b$ are set as follows:
\begin{itemize}
    \item Set $a=1$: It is physically intuitive that an AV with the intent to \textit{Go} will always perform the corresponding nudge signal. Therefore, we set $\pi(s^*|\text{Go}) = 1$.
    \item Derive $b$: The parameter $b = \pi(s^*|\text{Stop})$ is calibrated to be the maximum possible value that still makes the signal $s^*$ persuasive. This is achieved by ensuring that the posterior belief generated by $s^*$ is exactly equal to the HV's indifference posterior, $p^*$, derived in Equation~\eqref{eq:ppp}. We solve the following for $b$:
    \begin{equation}
    \begin{split}
        p^* &= (1-\theta)p_0(\text{Go}) \\
            &+ \theta \cdot \left(\frac{\pi(s^*|\text{Go})p_0(\text{Go})}{\pi(s^*|\text{Go})p_0(\text{Go}) + \pi(s^*|\text{Stop})p_0(\text{Stop})} \right)
    \end{split}
    \end{equation}
    Substituting $\pi(s^*|\text{Go})=1$, $\pi(s^*|\text{Stop})=b$, and $p_0(\text{Go})=\lambda$ yields:
    \begin{equation}
        p^*=(1-\theta)\lambda+\theta \cdot \left(\frac{\lambda}{\lambda+b(1-\lambda)} \right)
    \end{equation}
    Solving for $b$ gives:
    \begin{equation}
    \label{eq:b}
        b=\frac{\lambda(\theta - p^* + (1-\theta)\lambda)}{(1-\lambda)(p^* - (1-\theta)\lambda)}
    \end{equation}
\end{itemize}

\paragraph{Step 3: Commit and Execute}
The final TA-EBP committed signaling scheme is:
\begin{align*}
    \pi(s^*|\text{Go}) = 1, \quad & \pi(0|\text{Go}) = 0 \\
    \pi(s^*|\text{Stop}) = b, \quad & \pi(0|\text{Stop}) = 1-b
\end{align*}
where $s^*$ is the optimal nudge magnitude from Equation~\eqref{eq:s^*} and $b$ is given by Equation~\eqref{eq:b}.

Under the TA-EBP policy, the AV sends the persuasive signal $s^*$ with a total probability of $P(s^*) = \pi(s^*|\text{Go})p_0(\text{Go}) + \pi(s^*|\text{Stop})p_0(\text{Stop}) = \lambda + b(1-\lambda)$. This represents the fraction of interactions in which the HV is successfully persuaded to choose the safe action \textit{DC}. This outcome is an improvement over a fully revealing policy, which would only result in the safe action in a $\lambda$ fraction of cases.

\begin{figure*}[htbp]
    \centering
    \includegraphics[width=\textwidth]{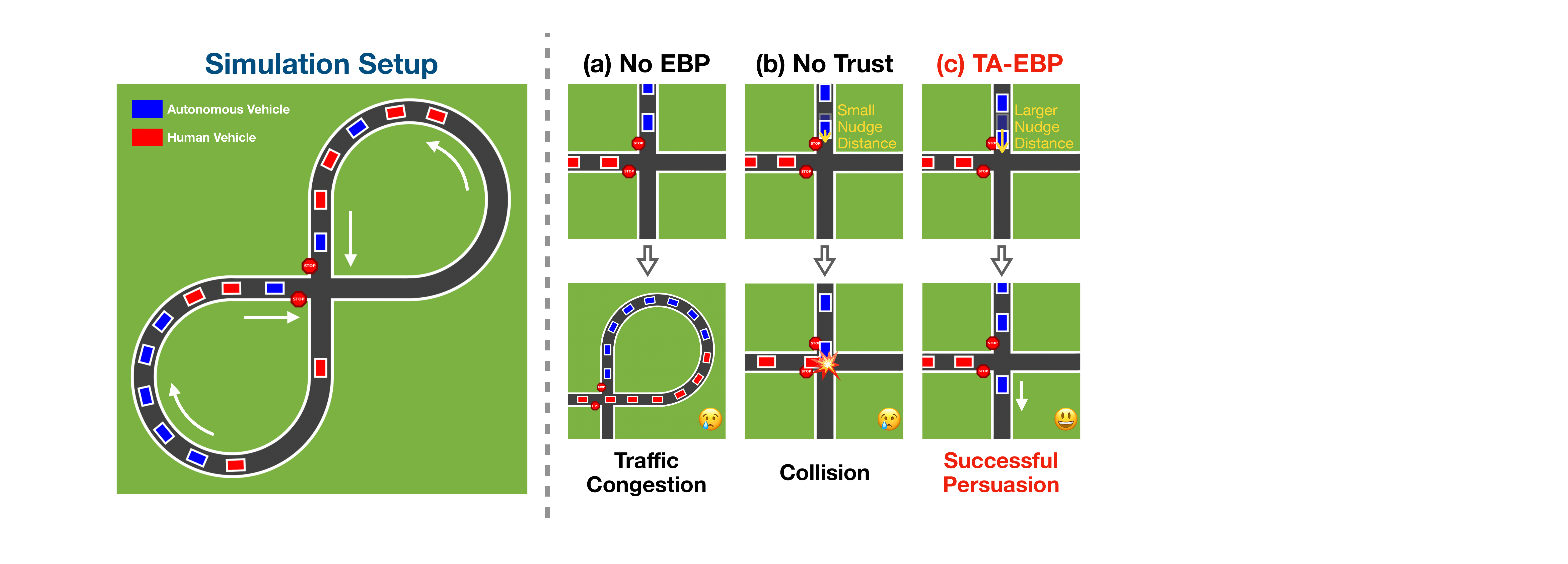} 
    \caption{The mixed-autonomy simulation environment (left) and a comparison of outcomes for the three models (right). (a) The ``No EBP" baseline results in traffic congestion as the AV perpetually yields. (b) The ``No Trust" model's suboptimal nudge fails to persuade the HV, leading to a collision. (c) Our ``TA-EBP" framework uses an optimally calibrated nudge and successfully persuades HV to yield, enhancing safety and efficiency.}
    \label{fig:sim_setup}
\end{figure*}

\section{Simulation}

In this section, we evaluate the practical performance of our TA-EBP framework in a custom mixed-autonomy traffic simulation. We detail the experimental design, define the baseline models for comparison, and analyze the results.

\subsection{Experimental Setup}

We designed a simulation environment featuring a single-lane, figure-8 track with a 2-way stop intersection at its center, as illustrated in the left panel of Figure~\ref{fig:sim_setup}. The environment is populated by eight autonomous vehicles (AVs, in blue) and eight human-driven vehicles (HVs, in red). When an AV and an HV meet at the intersection, the AV employs one of the strategies defined in Section~\ref{sec:baselines}. If two vehicles of the same type meet, they cross based on arrival time. After crossing, vehicles rejoin the opposite traffic queue, creating a continuous flow.

The simulation is parameterized as follows: vehicles have a cruising velocity of 4.0 units per second and maintain a minimum following distance of 2.0 units. At the intersection, vehicles stop 1.7 units from the center. Collisions occur only at the intersection if both vehicles proceed simultaneously. The key simulation model parameters are set as: (1) Prior $p_0(\text{Go})=\lambda = 0.2$. (2) Collision penalty $u(Go, DC)=-r=-3$. (3) Trust level $\theta=0.6$.
We model the human's inherent likelihood belief for a nudge of magnitude $s \in [0, 2]$ as linear functions: $\pi(s|\text{Go}) = s/2$ and $\pi(s|\text{Stop}) = 1 - s/2$. This yields a likelihood ratio $L(s) = s / (2-s)$.

\subsection{Models and Metrics}
\label{sec:baselines}
We compare the performance of our proposed TA-EBP framework against two critical baselines on the same simulation model, depicted in the right panels of Figure~\ref{fig:sim_setup}:
\begin{enumerate}
    \item {No EBP:} A non-communicative model where the AV never sends a signal (i.e., never nudges). Its decisions are based solely on predicting the HV's actions from the prior.
    \item {No Trust (Trust-Agnostic EBP):} An EBP model that does not account for trust. It operates under the classic Bayesian persuasion assumption that the HV is a perfectly rational updater, which is equivalent to assuming the HV's trust is $\hat{\theta}=1$.\footnote{For a fair comparison, the HV agent in all simulations, including this baseline, updates its belief using its true trust level $\theta=0.6$.}
\end{enumerate}

Performance is evaluated using two primary metrics:
\begin{itemize}
    \item {Collision Rate:} The fraction of AV-HV intersection crossings that result in a collision. Note that we assume collisions can only at the intersection, and other situations such as rear-ender are not considered.
    \item {\textit{DC} Rate:} The fraction of AV-HV crossings where the HV is successfully persuaded to ``drive if clear", i.e., $(\text{Number of}\textit{ DC}) / (\text{Number of all crossings})$. A higher \textit{DC} rate, indicating safer driving behavior, is preferred.
\end{itemize}

\subsection{Results and Discussion}

Each model was simulated for one hour, with the outcomes recorded in Table~\ref{tab:sim_results}.

\paragraph{No EBP}
The ``No EBP" baseline, lacking any signaling capability, results in AVs perpetually yielding to HVs. Based on the low prior for \textit{Go} ($\lambda=0.2$), the HV always assumes the AV will yield to her and thus drives recklessly (\textit{DR}). While this passive strategy avoids collisions, it sometimes leads to severe traffic congestion and zero throughput for the AVs whenever they encounter an HV as shown in Figure~\ref{fig:sim_setup}(a), which may significantly undermine the system efficiency. 

\paragraph{No Trust (Trust-Agnostic EBP)}
The ``No Trust" baseline demonstrates the danger of mismatched trust assumptions. The AV operates on a flawed model of the HV, assuming full trust ($\hat{\theta}=1$). This causes it to compute and execute a nudge ($\hat s=1.45$ units) that is insufficient to persuade the truly less-trusting HV ($\theta=0.6$), for whom the optimal signal is $s^*=1.64$ units. Upon observing the suboptimal signal, the HV's posterior belief remains below the indifference threshold ($p_{\theta}(Go|\hat s) < p^*$), which means $U(DR|p_{\theta}(Go|\hat s)) > U(DC|p_{\theta}(Go|\hat s))$ so the HV drives recklessly (\textit{DR}). The AV, however, assumes its persuasion was successful and also proceeds, leading to a collision. This systematic failure would result in a high collision rate of 20\% in theory, as illustrated in Figure~\ref{fig:sim_setup}(b). In our simulation, the 0.203 collision rate proves this theoretical calculation.

\paragraph{TA-EBP}
In contrast, our TA-EBP framework correctly accounts for the HV's actual trust level. It generates an optimally calibrated nudge ($s^*=1.64$ units) that successfully raises the HV's posterior belief to the persuasion threshold. As a result, in the simulation TA-EBP achieves a zero collision rate while persuading 38.0\% of HVs to choose \textit{DC}. In theory as calculated in Section~\ref{sec:TAEBP} the \textit{DC} rate should be 37.6\%, which agrees with our simulation result. As another comparison, our TA-EBP framework achieves a 17.6 percentage point increase in safe human driving behavior over a fully informative (non-persuasive) policy, which would only achieve a \textit{DC} rate equal to the prior of 20\% in theory. Our framework thus enhances safety and facilitates efficient, successful crossings, as seen in Figure~\ref{fig:sim_setup}(c).

\begin{table}[htbp]
\centering
\caption{\textnormal{\small{Performance comparison of the three frameworks in simulation.}}}
\begin{tabular}{lccc}
\toprule
\textbf{Metric} & \textbf{No EBP} & \textbf{No Trust} & \textbf{TA-EBP (Ours)} \\ \midrule
Collision Rate  & 0       & 0.203     & \textbf{0}        \\
\textit{DC} Rate         & 0       & 0         & \textbf{0.380}    \\ \bottomrule
\end{tabular}
\label{tab:sim_results}
\vspace{-8pt}
\end{table}

\section{Conclusion and Discussion}

\subsection{Conclusion}
In this work, we addressed the challenge of enhancing safety and efficiency in mixed-autonomy traffic intersections. Observing that traditional game-theoretic approaches can suffer from a decay of influence and can be perceived as manipulative, we proposed a novel approach that seeks to shape human driver beliefs in a transparent and non-strategic manner. Our primary contribution is the development of the Trust-Aware Embodied Bayesian Persuasion (TA-EBP) framework. This was achieved by making two key advancements to the classical Bayesian persuasion model. First, to account for human behavior variance, we introduced a trust parameter and derived a theorem for the minimum persuadable trust level, providing a formal condition for when influence is possible. An analysis of the affects of individual prior and collision penalty is also provided. Second, in contrast to prior work that uses abstract or discrete signals, we grounded the signaling mechanism in the physical world, developing a theorem that determines the optimal magnitude of an embodied signal. Lastly we synthesized these contributions into a complete, step-by-step framework for direct application. To validate our approach, we conducted simulations comparing TA-EBP against two baselines, and the results demonstrate that our framework significantly enhances both traffic safety and efficiency, successfully persuading human drivers to cross the intersection more cautiously.

\subsection{Limitations and Future Work}
This work represents a foundational step in applying Bayesian persuasion to embodied AI, and several exciting avenues for future research remain. First, while designed for a specific traffic scenario, the TA-EBP framework is broadly applicable. Future work should investigate its adaptation to other human-centered robotics contexts. For instance, a collaborative robot on an assembly line could use a subtle pre-movement of its arm as an embodied signal to persuade a human co-worker to prepare for a handover, thereby improving workflow fluency. Furthermore, our current model assumes a static trust level. A promising direction is to integrate a dynamic trust model, where a user's trust is updated after each interaction based on the outcome, as explored by \cite{Guo2021ModelingAP}. This would allow the agent's signaling scheme to dynamically adapt to the evolving human-robot relationship. Finally, while our simulation results are promising, the framework's effectiveness must be validated with human participants. Future work should involve comprehensive user studies in high-fidelity simulators or real-world settings to assess TA-EBP with a diverse human population.

\section*{ACKNOWLEDGMENT}
This work was supported by the National Science Foundation under Grant CCF 2236484.




\bibliographystyle{IEEEtran}
\bibliography{references.bib}

\end{document}